\begin{document}
\title{The Role of Weight Shrinking in Large Margin Perceptron Learning}
\author{Constantinos Panagiotakopoulos \and Petroula Tsampouka}
\institute{Physics Division, School of Technology \\ Aristotle University of Thessaloniki, Greece \\
\email{costapan@eng.auth.gr, petroula@gen.auth.gr} }
\maketitle
\begin{abstract}
We introduce into the classical perceptron algorithm with margin a mechanism that shrinks the current weight vector as a first step of the update. If the shrinking factor is constant the resulting algorithm may be regarded as a margin-error-driven version of NORMA with constant learning rate. In this case we show that the allowed strength of shrinking depends on the value of the maximum margin. We also consider variable shrinking factors for which there is no such dependence. In both cases we obtain new generalizations of the perceptron with margin able to provably attain in a finite number of steps any desirable approximation of the maximal margin hyperplane. The new approximate maximum margin classifiers appear experimentally to be very competitive in 2-norm soft margin tasks involving linear kernels.

\end{abstract}
\renewcommand{\vec}[1]{\mbox{\boldmath$#1$}}
\newcommand{\Tiny}[1]{\mbox{\tiny$#1$}}

\section{Introduction}
It is widely accepted that the generalization ability of learning machines improves as the margin of the solution hyperplane increases \cite{Vap}. The simplest online learning algorithm for binary linear classification, the perceptron \cite{Ros,Nov}, does not aim at any margin. The problem, instead, of finding the optimal separating hyperplane is central to Support Vector Machines (SVMs) \cite{Vap,CST}.

SVMs obtain large margin solutions by solving a constrained quadratic optimization problem using dual variables. In the early days, however, efficient implementation of SVMs was hindered by the quadratic dependence of their memory requirements on the number of training examples. To overcome this obstacle decomposition methods \cite{Plat,Joa} were developed that apply optimization only to a subset of the training set. Although such methods led to considerable improvement the problem of excessive runtimes when processing very large datasets remained. Only recently the so-called linear SVMs \cite{Joa06,HCL,PT2} by making partial use of primal notation in the case of linear kernels managed to successfully deal with massive datasets.

The drawbacks of the dual formulation motivated research long before the advent of linear SVMs in alternative large margin classifiers naturally formulated in primal space. Having the perceptron as a prototype they focus on the primal problem by updating a weight vector which represents their current state whenever a data point presented to them satisfies a specific condition. By exploiting their ability to process one example at a time\footnote{The conversion of online algorithms to the batch setting is done by cycling repeatedly through the dataset and using the last hypothesis for prediction.} they save time and memory and acquire the potential to handle large datasets. The first algorithm of the kind is the perceptron with margin \cite{DH} the solutions of which provably possess only up to $1/2$ of the maximum margin \cite{KM}. Subsequently, various others succeeded in approximately attaining maximum margin by employing modified perceptron-like update rules. For ROMMA \cite{LL} such a rule is the result of a relaxed optimization which reduces all constraints to just two. In contrast, ALMA \cite{Gen} and much later CRAMMA \cite{TST1} and MICRA \cite{TST2} employ a ``projection" mechanism to restrict the length of the weight vector and adopt a learning rate and margin threshold in the condition which both follow specific rules involving the number of updates. Very recently, the margitron \cite{PT1} and the perceptron with dynamic margin (PDM) \cite{PT3} using modified conditions managed to approximately reach maximum margin solutions while maintaining the original perceptron update.
 
A somewhat different approach from the hard margin one adopted by most of the algorithms above was also developed which focuses on the minimization of the regularized 1-norm soft margin loss through stochastic gradient descent. Notable representatives of this approach are the pioneer NORMA \cite{KSW} and Pegasos \cite{SSS}. Stochastic gradient descent gives rise to perceptron-like updates  an important ingredient of which is the ``shrinking" of the current weight vector. Shrinking is always imposed when a pattern is presented to the algorithm with it being the only modification suffered by the weight vector in the event that its condition is violated and as a consequence no loss is incurred. The cummulative effect of shrinking is to gradually diminish the impact of the earlier contributions to the weight vector. Shrinking has also been employed by algorithms which do not have their origin in stochastic gradient descent as an accompanying mechanism in perceptron-based budget scenarios for classification \cite{DSS} or tracking \cite{CBG}.

Our purpose in the present work is to investigate the role that shrinking of the weight vector might play in large margin perceptron learning. This is motivated by the observation that such a mechanism naturally emerges in attempts to attack the 1-norm soft margin task through stochastic gradient descent. If we accept that algorithms like NORMA succeed in minimizing the regularized 1-norm soft margin loss they should be able to solve the hard margin problem as well for sufficiently small non-zero values of the regularization parameter which also controls the strength of shrinking. Thus shrinking, as weak as it may be, when introduced into the perceptron algorithm with margin might prove beneficial. Another factor to be taken into account is that the shrinking mechanism in the algorithms considered here is operative only for erroneous trials, a feature that offers them the possibility to terminate in a finite number of steps. Therefore, shrinking in such algorithms may need to be strengthened relative to algorithms like NORMA to compensate for the fact that the latter shrink the weight vector even when the condition is violated. In conclusion, the amount of shrinking that a perceptron with margin could tolerate without it destroying the conservativeness of the update might be sufficient to raise the theoretically guaranteed fraction of the maximum margin achieved to a value larger than $1/2$. It turns out that this is actually the case.

The remaining of this paper is organized as follows. Section 2 contains some preliminaries and a description of the algorithms. In Sect. 3 we present a theoretical analysis of the algorithms. Section 4 is devoted to implementational issues and a brief experimental evaluation while Sect. 5 contains our conclusions.

\section{The Algorithms}
Let us consider a linearly separable training set $\{(\vec x_k, l_k)\}^m_{k=1}$, with vectors $\vec x_k\in \bbbr^d$ and labels $l_k \in \{+1,-1\}$. This training set may be either the original dataset or the result of a mapping into a feature space of higher dimensionality \cite{Vap,CST}. By placing $\vec x_k$ in the same position at a distance $\rho$ in an additional dimension, i.e., by extending $\vec x_k$ to $[\vec x_k, \rho]$, we construct an embedding of our data into the so-called augmented space \cite{DH}. This way, we construct hyperplanes possessing bias in the non-augmented feature space. Following the augmentation, a reflection with respect to the origin of the negatively labeled patterns is performed. This allows for a uniform treatment of both categories of patterns.  Also, $R\equiv\displaystyle \max_{k} \left\| \vec{y}_{k} \right\|$ with $\vec{y}_{k} \equiv [l_k\vec x_k, l_k\rho]$ the $k^{\rm {th}}$ augmented and reflected pattern.
 
The relation characterizing optimally correct classification of the training patterns $\vec{y}_{k}$ by a 
weight vector $\vec{u}$ of unit norm in the augmented space is
\begin{equation}
\label{gamma}
\vec{u} \cdot \vec{y}_{k}\ge \gamma_{\rm d}\equiv \displaystyle \max_{ \vec{u}^{\prime}:\left\|\vec{u}^{\prime}\right\|=1}
\displaystyle \min_{i}\left \{\vec{u}^{\prime} \cdot \vec{y}_{i}\right \}  \ \ \ \ \forall k \enspace.
\end{equation}
We shall refer to $\gamma_{\rm d}$ as the maximum directional margin. It coincides with
the maximum margin in the augmented space with respect to hyperplanes passing through
the origin. The maximum directional margin $\gamma_{\rm d}$ is upper bounded by the maximum geometric margin $\gamma$ in the non-augmented space and tends to it as $\rho \to \infty$ \cite{TST}.  

We consider algorithms in which the augmented weight vector $\vec{a}^s_{t}$ is initially set to zero, i.e. $\vec{a}^s_{0}=\vec0$, and is updated according to the perceptron-like rule
\begin{equation}
\label{update}
\vec{a}^s_{t+1}=c^s_t\vec{a}^s_{t}+\eta\vec{y}_k
\end{equation}
each time the ``misclassification" condition
\begin{equation}
\label{cond}
{\bar c}^s_t\vec a^s_t \cdot \vec y_k \le b
\end{equation}
is satisfied by a training pattern $\vec{y}_{k}$, i.e., whenever a margin error is made on $\vec{y}_{k}$. Here $0<c^s_t,{\bar c}^s_t \le 1$ are ``shrinking factors" which may vary with the number $t$ of updates, $\eta>0$ is a constant learning rate and $b>0$ acts as a margin threshold in the misclassification condition. If we set $c^s_t={\bar c}^s_t=1$ we recover the classical perceptron algorithm with margin. The role of $c^s_t$ is to shrink the current weight vector as a first step of the update, thereby enhancing the importance of the current update relative to the previous one. At the same time such a shrinking acts as a mechanism of effectively increasing the margin threshold of the condition, an effect that may be further strengthened through the introduction of the factor ${\bar c}^s_t$ in (\ref{cond}). In fact, for appropriate choices of $c^s_t,{\bar c}^s_t$, to which we confine our interest here, it is possible to equivalently introduce shrinking into the perceptron with margin via a learning rate and margin threshold which both increase with $t$. Notice that we denote by $\vec a^s_t$ the weight vector of the algorithms with shrinking to reserve the notation $\vec a_t$ for the weight vector of the equivalent algorithms with variable learning rate and margin threshold.

\begin{wrapfigure}{l}{0.55\textwidth}
\vspace{-0pt}
\epsfig{file=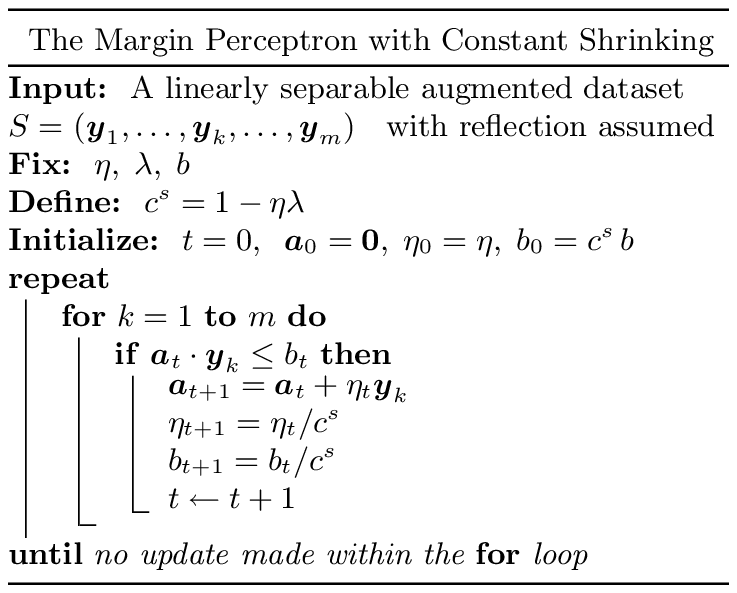, width=0.54\textwidth}
\vspace{-10pt}
\end{wrapfigure}
We investigate the impact of shrinking on large margin perceptron learning by considering both constant and variable shrinking factors. If shrinking does not depend on $t$ we set ${\bar c}^s_t=1$ since a constant ${\bar c}^s_t$ may be absorbed into a redefinition of $b$. We also express $c^s_t$ in terms of a ``shrinking parameter" $\lambda<1/\eta$ as $c^s_t=1-\eta \lambda$. Then (\ref{update}) becomes the update of NORMA for $\vec a^s_t \cdot \vec y_k \le b$. NORMA, however, updates its weight vector even when $\vec a^s_t \cdot \vec y_k > b$. In this case the update reduces to pure shrinking $\vec{a}^s_{t+1}=(1-\eta \lambda )\vec{a}^s_{t} $. This is the important difference from our algorithm in which an update occurs only if the misclassification condition is satisfied, thereby making convergence in a finite number of steps possible. 

Let us divide the update rule (\ref{update}) with $(1-\eta \lambda)^t$ and condition (\ref{cond}) with $(1-\eta \lambda)^{t-1}$. Also let $\vec a_t=\vec a^s_t/(1-\eta \lambda)^{t-1}$. Then, we obtain a completely equivalent algorithm with update
\begin{equation}
\label{update12}
\vec{a}_{t+1}=\vec{a}_{t}+\frac{\eta}{(1-\eta \lambda)^t} \vec{y}_k
\end{equation}
and condition
\begin{equation}
\label{cond12}
\vec a_t \cdot \vec y_k \le \frac{b}{(1-\eta \lambda)^{t-1}} \enspace.
\end{equation}

An algorithm with variable shrinking is obtained if we choose $c^s_t={\bar c}^s_t=\left(t/(t+1)\right)^n$, where $n \ge 0$ is an integer. For $n=1$ the shrinking factor $c^s_t$ entering the update is the one encountered in Pegasos. Pegasos, however, has variable learning rate, ${\bar c}^s_t=1$ and performs, just like NORMA, a pure shrinking update when its condition is violated. In addition, its update ends with a projection step. A variable shrinking factor $t/(t+\lambda)$ is also employed by SPA \cite{CBG} in which $b=0$. Such a factor is related to ours for $\lambda=n$ even if $n \neq 1$ since  for $t \gg n$
\[
\frac{t}{t+n}=\prod^{n-1}_{k=0}\frac{t+k}{t+k+1}=\left(\frac{t}{t+1}\right)^n \prod^{n-1}_{k=0}\left(1+\frac{k}{t(t+k+1)}\right)\approx\left(\frac{t}{t+1}\right)^n \enspace.
\]

Let us multiply both the update rule (\ref{update}) and condition (\ref{cond}) with $(t+1)^n$ and set $\vec a_t=t^n \vec a^s_t$. Then, we obtain a completely equivalent algorithm with update
\begin{equation}
\label{update22}
\vec{a}_{t+1}=\vec{a}_{t}+\eta (t+1)^n \vec{y}_k
\end{equation}
and condition
\begin{equation}
\label{cond22}
\vec a_t \cdot \vec y_k \le b(t+1)^n \enspace.
\end{equation}
\begin{wrapfigure}{l}{0.55\textwidth}
\vspace{-10pt}
\epsfig{file=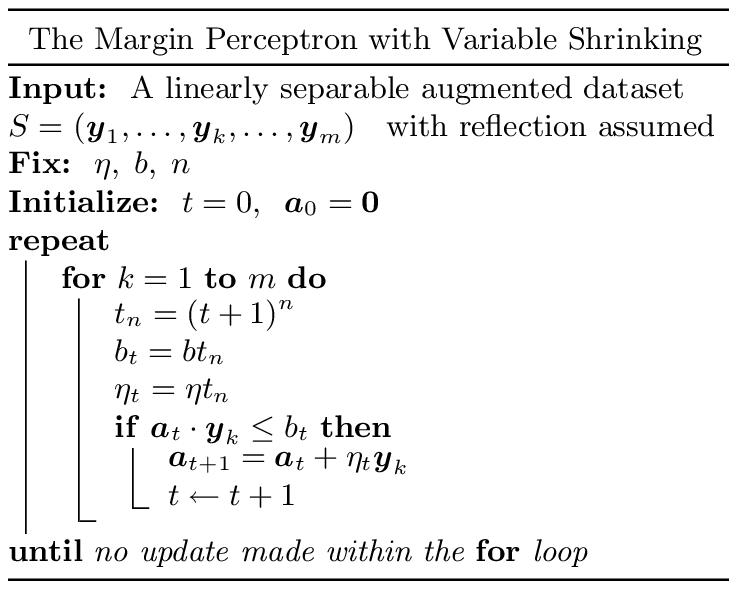, width=0.54\textwidth}
\vspace{-10pt}
\end{wrapfigure}
If we had chosen ${\bar c}^s_t=1$ we should have multiplied (\ref{cond}) with $t^n$. As a result the threshold in (\ref{cond22}) would have been $bt^n$, a difference that does not seem to be of paramount importance. However, the choice ${\bar c}^s_t=\left(t/(t+1)\right)^n$ prevailed for the sake of convenience. The choice, instead, $c^s_t={\bar c}^s_t=t/(t+n)$$=P^n_t/P^n_{t+1}$ with $P^n_t \equiv \prod^{n-1}_{k=0}(t+k)$ would have led to $\vec a_t=P^n_t \vec a^s_t$ and to the replacement of $(t+1)^n$ with $P^n_{t+1}$ in (\ref{update22}) and (\ref{cond22}).

We shall refer to the algorithm with update (\ref{update12}) and condition (\ref{cond12}) as the margin perceptron with constant shrinking. The algorithm, instead, with update (\ref{update22}) and condition (\ref{cond22}) will be called the margin perceptron with variable shrinking. The above formulations of the algorithms are the ones that will henceforth be considered in place of the original formulations of (\ref{update}) and (\ref{cond}).

\section{Theoretical Analysis}
We begin with the analysis of the margin perceptron with constant shrinking.

\begin{theorem}
\label{theorem1}
The margin perceptron with constant shrinking converges in a finite number $t_{\rm c}$ of updates satisfying the bound
\begin{equation}
\label{tbound1}
t_{\rm c} \le \frac{1}{\delta (1-\epsilon)} \frac{R^2}{\gamma^2_{\rm d}} \ln \frac{4-(2+\delta)\epsilon+\delta}{(2+\delta)\epsilon-\delta}
\end{equation}
provided $\delta \equiv {\eta R^2} /{b} \le 2$ and $\epsilon \equiv 1- \lambda b /\gamma^2_{\rm d}$ obey the constraint $\delta/(2+\delta)<\epsilon<1$.
Moreover, the zero-threshold solution hyperplane possesses margin $\gamma^{\prime}_{\rm d}$ which is a fraction $f$ of the maximum margin $\gamma_{\rm d}$ obeying the inequality 
\begin{equation}
\label{fbound1}
f \equiv \frac{\gamma^{\prime}_{\rm d}}{\gamma_{\rm d}} >\frac{1}{2+\delta}+\frac{1-\epsilon}{2}.
\end{equation}
Finally, an after-run lower bound on $f$ involving the margin $\gamma^{\prime}_{\rm d}$ achieved, the length $\left\| \vec a_{t_{\rm c}} \right\|$ of the solution weight vector $\vec a_{t_{\rm c}}$ and the number $t_{\rm c}$ of updates is given by
\begin{equation}
\label{fboundafter1}
f  \ge  \frac{1-(1-\eta \lambda)^{t_{\rm c}}}{\lambda (1-\eta \lambda)^{t_{\rm c}-1}} \frac {\gamma^{\prime}_{\rm d}}{\left\| \vec a_{t_{\rm c}} \right\|}.
\end{equation}
\end{theorem}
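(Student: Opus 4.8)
The plan is to run a Novikoff-style argument adapted to the shrinking weights, working throughout with the equivalent formulation (\ref{update12})--(\ref{cond12}) and abbreviating $c \equiv 1-\eta\lambda$, with $0<c<1$. Let $\vec u$ be the unit vector attaining the maximum directional margin, so $\vec u\cdot\vec y_k \ge \gamma_{\rm d}$ for every $k$. First I would lower bound the projection onto $\vec u$: since each update adds $\eta c^{-t}\vec y_k$, telescoping from $\vec a_0=\vec 0$ and summing the geometric series $\sum_{j=0}^{t-1}c^{-j}$ gives
\[
\vec u\cdot\vec a_{t}\ge \eta\gamma_{\rm d}\sum_{j=0}^{t-1}c^{-j}=\frac{\gamma_{\rm d}(1-c^{t})}{\lambda c^{t-1}}.
\]
Next I would upper bound the length: expanding $\|\vec a_{t+1}\|^2$, applying the triggering condition $\vec a_t\cdot\vec y_k\le b\,c^{-(t-1)}$ to the cross term and $\|\vec y_k\|\le R$ to the quadratic term, and summing $\sum_{j=0}^{t-1}c^{-2j}$, yields $\|\vec a_{t}\|^2\le B\,(1-c^{2t})/\big(c^{2t-2}(1-c^2)\big)$ with $B\equiv\eta(2bc+\eta R^2)$.

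For the convergence bound (\ref{tbound1}) I would feed these two estimates into Cauchy--Schwarz, $(\vec u\cdot\vec a_{t})^2\le\|\vec a_{t}\|^2$, cancel the common factor $(1-c^{t})$, and reach a linear inequality in $x=c^{t}$ of the form $A(1-x)\le B(1+x)$ with $A\equiv\gamma_{\rm d}^2(1-c^2)/\lambda^2$. Solving gives $c^{t_{\rm c}}\ge(A-B)/(A+B)$, hence $t_{\rm c}\le\ln\!\big((A+B)/(A-B)\big)/\ln(1/c)$. I would then replace $\ln(1/c)=-\ln(1-\eta\lambda)$ by its lower bound $\eta\lambda$, so the prefactor becomes $1/(\eta\lambda)=R^2/\big(\delta(1-\epsilon)\gamma_{\rm d}^2\big)$, and re-express $A,B$ through $\delta=\eta R^2/b$ and $\epsilon=1-\lambda b/\gamma_{\rm d}^2$. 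The requirement $\epsilon>\delta/(2+\delta)$ is exactly what keeps the denominator $(2+\delta)\epsilon-\delta$ positive, so that $A>B$ and the logarithm is well defined.

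The margin bounds come next. The after-run inequality (\ref{fboundafter1}) is immediate: Cauchy--Schwarz with the first estimate gives $\|\vec a_{t_{\rm c}}\|\ge\gamma_{\rm d}(1-c^{t_{\rm c}})/(\lambda c^{t_{\rm c}-1})$, and solving for $\gamma_{\rm d}$ and inserting into $f=\gamma^{\prime}_{\rm d}/\gamma_{\rm d}$ reproduces (\ref{fboundafter1}). For the a priori bound (\ref{fbound1}) I would invoke termination: when the run stops no pattern satisfies (\ref{cond12}), so $\min_k\vec a_{t_{\rm c}}\cdot\vec y_k> b\,c^{-(t_{\rm c}-1)}$ and hence $\gamma^{\prime}_{\rm d}>b/\big(c^{t_{\rm c}-1}\|\vec a_{t_{\rm c}}\|\big)$. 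Bounding $c^{t_{\rm c}-1}\|\vec a_{t_{\rm c}}\|$ by the length estimate and bounding $1-c^{2t_{\rm c}}\le 4AB/(A+B)^2$ through $c^{t_{\rm c}}\ge(A-B)/(A+B)$, the factors arrange into $f>\frac{1}{2}(1-\epsilon)\,(1+A/B)=\frac{1}{2}(1-\epsilon)+\frac{1}{2}(1-\epsilon)(A/B)$.

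The main obstacle, and the place where the hypothesis $\delta\le 2$ is indispensable, is passing from these exact but cluttered expressions to the clean statements of the theorem. Both the exact logarithm argument and the exact margin factor carry a correction proportional to $\mu\equiv\eta\lambda=\delta(1-\epsilon)\gamma_{\rm d}^2/R^2$; explicitly the ratio equals $\big(N_0-\mu(3-2\epsilon)\big)/\big(D_0+\mu(1-2\epsilon)\big)$ with $N_0=4-(2+\delta)\epsilon+\delta$ and $D_0=(2+\delta)\epsilon-\delta$. A short computation shows this is at most $N_0/D_0$ precisely when $(1-\epsilon)(\delta-2)\le 0$, i.e. when $\delta\le 2$; the analogous margin step reduces to $\frac{1}{2}(1-\epsilon)(A/B)\ge 1/(2+\delta)$, which collapses to $\mu(2-\delta)\ge 0$. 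Thus I expect the crux to be verifying that discarding the $\mu$-corrections only weakens both bounds, which holds exactly on the stated range of $\delta$, whereupon (\ref{tbound1}) and (\ref{fbound1}) follow.
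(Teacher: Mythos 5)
Your proposal is correct and follows essentially the same route as the paper's proof: the same telescoped lower bound on $\vec u\cdot\vec a_t$, the same upper bound on $\|\vec a_t\|^2$ from the triggering condition, reduction to a linear inequality in $c^t$ (your $B/A$ is exactly the paper's quantity ${\cal A}$), the bound $\ln(1/c)\ge\eta\lambda$, and the same termination argument for both margin bounds. The only differences are presentational --- you verify the $\delta\le 2$ simplifications by expanding the exact ratio $\bigl(N_0-\mu(3-2\epsilon)\bigr)/\bigl(D_0+\mu(1-2\epsilon)\bigr)$ directly, where the paper instead bounds ${\cal A}\le 1-D_0/2$ via its inequality (\ref{inqq}) and invokes monotonicity of $x\mapsto(1+x)/(1-x)$; these computations are algebraically equivalent.
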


\begin{proof}
Taking the inner product of (\ref{update12}) with the optimal direction $\vec u$ and using (\ref{gamma}) we get
\[
\vec u \cdot \vec a_{t+1} -\vec u \cdot \vec a_{t} =\frac{\eta}{(1-\eta \lambda)^{t}} \vec u \cdot \vec y_k \ge \frac{ \eta}{(1-\eta \lambda)^{t}}\gamma_{\rm d}
\]
a repeated application of which, taking into account that $\vec a_0= \vec 0$, gives 
\begin{equation}
\label{lbound1}
\left\| \vec a_t \right\| \ge \vec u \cdot \vec a_t \ge \sum_{k=0}^{t-1}\frac{\eta \gamma_{\rm d}}{(1-\eta \lambda)^{k}} =\frac{1-(1-\eta \lambda)^{t}}{\lambda (1-\eta \lambda)^{t-1}} \gamma_{\rm d}\enspace.
\end{equation}
Here we made use of $\sum_{k=0}^{t-1} \alpha^k=(\alpha^t-1)/(\alpha-1)$. From (\ref{update12}) and (\ref{cond12}) we obtain
\[
\left\|\vec a_{t+1}\right\|^2 -\left\|\vec a_t\right\|^2=\frac{\eta^2}{(1-\eta \lambda)^{2t}} \left\|\vec y_k\right\|^2 +\frac{2\eta}{(1-\eta \lambda)^{t}} \vec a_t \cdot \vec y_k \le  
 \frac{\eta^2 R^2+2\eta(1-\eta \lambda)b }{(1-\eta \lambda)^{2t}}\enspace.
 \]
A repeated application of the above inequality, assuming $\vec a_0= \vec 0$, leads to
\begin{equation}
\label{upbound1}
\left\|\vec a_{t}\right\|^2 \le \sum_{k=0}^{t-1}\frac{\eta^2 R^2+2\eta(1-\eta \lambda) b } {(1-\eta \lambda)^{2k}}=\frac{\left(1-(1-\eta \lambda)^{2t}\right){\left(\eta R^2+2(1-\eta \lambda) b\right) }}{\lambda(2-\eta \lambda)(1-\eta \lambda)^{2(t-1)}} \enspace.
\end{equation}
Comparing the lower bound on $\left\|\vec a_{t}\right\|^2$ from (\ref{lbound1}) with its upper bound (\ref{upbound1}) we get
\begin{equation}
\label{tbound121}
\frac{\left(1-(1-\eta \lambda)^t\right)^2}{\lambda} \gamma^2_{\rm d} \le \frac{1-(1-\eta \lambda)^{2t}}{2-\eta \lambda}\left(\eta R^2+2(1-\eta \lambda) b\right)
\end{equation}
or, noticing that $1-(1-\eta \lambda)^{2t}=\left(1-(1-\eta \lambda)^t\right)\left(1+(1-\eta \lambda)^t\right)$, we obtain
\begin{equation}
\label{tbound122}
1-(1-\eta \lambda)^t \le \left(1+(1-\eta \lambda)^t\right) {\cal A} \enspace.
\end{equation}
Here
\begin{equation}
\label{tbound123}
{\cal A} \equiv \lambda \left(\frac{b}{\gamma^2_{\rm d}}\right) \frac{\eta R^2/b+2(1-\eta \lambda) }{2-\eta\lambda}<1 \enspace.
\end{equation}
The condition ${\cal A}<1$ ensures that (\ref{tbound122}) does lead to an upper bound on the number of updates since otherwise (\ref{tbound122}) is always satisfied. This translates into a very restrictive upper bound on the shrinking parameter $\lambda$. This upper bound depends on the values of the remaining parameters but is never larger than $\gamma^2_{\rm d}/b$. From (\ref{tbound122}), provided ${\cal A}<1$, we easily derive the following upper bound on the number of updates
\begin{equation}
\label{tbound124}
t\le t_{\rm b} \equiv \frac{1}{\ln (1-\eta\lambda)^{-1}} \ln{\frac{1+{\cal A}}{1-{\cal A}}} \enspace.
\end{equation}
For $\delta \le 2$ it holds that 
\begin{equation}
\label{inqq}
\frac{\delta+2(1-\eta\lambda)}{2-\eta\lambda}\le 1+\frac{\delta}{2}
\end{equation}
and
\begin{equation}
\label{tbound125}
{\cal A}=(1-\epsilon)\left(\frac{\delta+2(1-\eta\lambda)}{2-\eta\lambda} \right) \le (1-\epsilon)\left(1+\frac{\delta}{2}\right)=1-\frac{(2+\delta)\epsilon-\delta}{2} \enspace.
\end{equation}
As a consequence, $\epsilon > \delta/(2+\delta)$ ensures that ${\cal A}<1$. In addition
\begin{equation}
\label{tbound126}
\ln (1-\eta\lambda)^{-1} \ge \eta\lambda = \delta(1-\epsilon)\frac{\gamma^2_{\rm d}}{R^2} \enspace.
\end{equation}
Combining (\ref{tbound124}), (\ref{tbound125}) and (\ref{tbound126}) we finally arrive at the slightly simplified upper bound on the number of updates given by (\ref{tbound1}).

Upon convergence of the algorithm in $t_{\rm c}$ updates condition (\ref{cond12}) is violated by all patterns. Therefore, the achieved margin $\gamma^{\prime}_{\rm d} > {b}/\left({(1-\eta\lambda)^{t_{\rm c}-1}\left\| \vec a_{t_{\rm c}} \right\|}\right)$.
Thus, 
\begin{eqnarray}
f^2=\frac{{\gamma^{\prime}_{\rm d}}^2}{\gamma_{\rm d}^2}&>&\frac{b^2}{(1-\eta\lambda)^{2(t_{\rm c}-1)}\left\| \vec a_{t_{\rm c}} \right\|^2\gamma_{\rm d}^2} \ge \frac{\lambda (2-\eta \lambda)b^2}{\left(1-(1-\eta \lambda)^{2t_{\rm c}}\right)\left(\eta R^2+2(1-\eta \lambda) b\right)\gamma_{\rm d}^2 } \nonumber \\
&\ge&\frac{\lambda (2-\eta \lambda)b^2}{\left(1-(1-\eta \lambda)^{2t_{\rm b}}\right)\left(\eta R^2+2(1-\eta \lambda) b\right)\gamma_{\rm d}^2}=\left(\frac{\lambda b}{\left(1-(1-\eta \lambda)^{t_{\rm b}}\right)\gamma_{\rm d}^2}\right)^2, \nonumber
\end{eqnarray}
where use has been made of the upper bound (\ref{upbound1}) on $\left\| \vec a_{t_{\rm c}} \right\|^2$ and of the fact that (\ref{tbound121}) at $t=t_{\rm b}$ holds as an equality. Taking the square root and making use of the definition of $t_{\rm b}$ from (\ref{tbound124}) the previous inequality becomes
\[
f> \lambda \frac{b}{\gamma_{\rm d}^2}\left(\frac{1+{\cal A}}{2{\cal A}}\right)=\frac{1}{2}\left(\frac{\lambda b}{{\cal A}\gamma_{\rm d}^2}+\lambda \frac{b}{\gamma_{\rm d}^2} \right)= \frac{1}{2}\left(\frac{2-\eta\lambda}{\delta+2(1-\eta\lambda)} + 1-\epsilon \right)\enspace.
\]
For $\delta \le 2$ the above inequality gives rise to (\ref{fbound1}) because of (\ref{inqq}).

Finally, (\ref{fboundafter1}) is readily obtained if in the ratio ${\gamma^{\prime}_{\rm d}}/{\gamma_{\rm d}}$ we employ the upper bound on $\gamma_{\rm d}$ derivable from (\ref{lbound1}). \qed
\end{proof}

\begin{remark}
\label{rm1}
The parameters $\delta$ and $\epsilon$ are independent. Therefore, we may consider choosing $\delta \ll 1$ while keeping $\epsilon$ fixed. In this case the upper bound (\ref{tbound1}) on the number of updates becomes $O \left(\delta^{-1}R^2/\gamma_{\rm d}^2 \right)$ and from (\ref{fbound1}) the before-run lower bound on $f$ approaches as $\delta \to 0$ the value $1-\epsilon/2$. This generalizes the well-known result that the classical perceptron algorithm with margin (obtainable when $\lambda \to 0$ or $\epsilon \to 1$) has in the limit $\delta \to 0$ a theoretically guaranteed before-run value of $f$ equal to $1/2$. By subsequently letting $\epsilon \to 0$ (i.e., $\lambda \to \gamma^2_{\rm d}/b$) we may approach solutions with maximum margin.
\end{remark}

\begin{remark}
\label{rm2}
To facilitate comparison with other large margin classifiers we may relate the independent parameters $\delta$ and $\epsilon$ and obtain a single parameter $\zeta<1/\sqrt{2}$ through the relations $\delta=2\zeta$, $\epsilon=\delta(1+\delta)/(2+\delta)=\zeta(1+2\zeta)/(1+\zeta)$. Then, from (\ref{tbound1}) and (\ref{fbound1}) we have that the margin perceptron with constant shrinking achieves ``accuracy'' $\zeta$, i.e.,
\[ 
f>1-\zeta \enspace,
\] 
in a number $t_{\rm c}$ of updates satisfying the bound
\[
t_{\rm c} \le \frac{1}{\zeta}\left(\frac{1+\zeta}{1-2\zeta^2}\right) \frac{R^2}{\gamma^2_{\rm d}} \ln \frac{\sqrt{1-\zeta^2}}{\zeta} \enspace.
\]
Notice that the quantity ${R}/{\gamma_{\rm d}}$ does not enter the logarithm. In this sense the above bound, which is $O \left((\zeta^{-1}R^2/\gamma_{\rm d}^2)\ln \zeta^{-1}\right)$ for $\zeta \ll1$, is the best among the bounds of perceptron-like maximum margin algorithms. Typically, algorithms which require at least an approximate knowledge of the value of $\gamma_{\rm d}$ to tune their parameters have bounds $O \left((\zeta^{-1}R^2/\gamma_{\rm d}^2)\ln (\zeta^{-1}\left(R/\gamma_{\rm d}\right)^k)\right)$ with $k=1,2$ while algorithms which do not assume such a knowledge have bounds $O \left(\zeta^{-2}R^2/\gamma_{\rm d}^2 \right)$.
\end{remark}

\begin{remark}
\label{steps}
Suppose we are given $\bar{\gamma}_{\rm d}<{\gamma}_{\rm d}$. It may be expressed as $\bar{\gamma}_{\rm d}=(1-\xi)\gamma_{\rm d}$. Setting $\lambda=(2/(2+\delta))\bar{\gamma}_{\rm d}^2/b$ it holds that $\epsilon=1-(2/(2+\delta))(1-\xi)^2>\delta/(2+\delta)$. Then (\ref{fbound1}) gives ${\gamma^{\prime}_{\rm d}}/{\gamma_{\rm d}}> 1-\xi+\left(\xi^2-\delta(1-\xi)\right)/(2+\delta)$. Thus, for $\delta (1-\xi)\le \xi^2$ a solution with margin $\gamma^{\prime}_{\rm d}>\bar{\gamma}_{\rm d}$ is obtained which provides a better lower bound on $\gamma_{\rm d}$ than the one used as an input. A repeated application of this procedure starting, e.g., with $\bar{\gamma}_{\rm d}=0$, $\xi=1$, $\lambda=0$ gives solutions possessing margin which is any desirable approximation of $\gamma_{\rm d}$ without prior knowledge of its value. An estimate of the quality of the approximation at each stage may be obtained via the after-run lower bound (\ref{fboundafter1}) on ${\gamma^{\prime}_{\rm d}}/{\gamma_{\rm d}}$ which provides an upper bound on $\gamma_{\rm d}$. In practice, only a few repetitions of this procedure are required to obtain a satisfactory approximation of the optimal solution because the margin actually achieved by the algorithm is considerably larger than the one suggested by (\ref{fbound1}).
\end{remark}

\begin{remark}
\label{rm3}
From (\ref{upbound1}) we see that for the algorithm described by (\ref{update}) and (\ref{cond}) with $c^s_t=1-\eta \lambda, \;{\bar c}^s_t=1$ it holds that $\left\|\vec a^s_{t}\right\|^2=\left\|\vec a_{t}\right\|^2(1-\eta \lambda)^{2(t-1)} \le \left(\eta R^2+2(1-\eta \lambda)b\right)/\left(\lambda(2-\eta \lambda)\right)$. Thus, it is confirmed in this context the well-known fact that constant shrinking leads to bounded length of the weight vector. 
\end{remark}

To proceed with our analysis of the margin perceptron with variable shrinking we need some inequalities involving sums of powers of integers which we present in the form of lemmas. Their proofs can be found in the Appendix.

\begin{lemma}
\label{lemma1}
Let $n \ge 0$ be an integer. Then, it holds that
\begin{equation}
\label{lm1}
(n+1)\sum_{k=1}^{t}k^n \le t(t+1)^n \enspace .
\end{equation}
\end{lemma}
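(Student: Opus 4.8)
The plan is to prove the inequality $(n+1)\sum_{k=1}^{t}k^n \le t(t+1)^n$ by induction on $t$, which seems the most natural approach for a statement quantified over all positive integers $t$ with $n$ fixed. First I would check the base case $t=1$, where the left side is $(n+1)\cdot 1 = n+1$ and the right side is $1 \cdot 2^n = 2^n$; since $2^n \ge n+1$ for all integers $n \ge 0$ (itself a trivial induction or binomial observation), the base case holds.

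For the inductive step, I would assume $(n+1)\sum_{k=1}^{t}k^n \le t(t+1)^n$ and try to establish $(n+1)\sum_{k=1}^{t+1}k^n \le (t+1)(t+2)^n$. Adding $(n+1)(t+1)^n$ to both sides of the inductive hypothesis, it suffices to show
\[
t(t+1)^n + (n+1)(t+1)^n \le (t+1)(t+2)^n,
\]
i.e. $(t+n+1)(t+1)^n \le (t+1)(t+2)^n$. Dividing by $(t+1)^n$, the goal reduces to the clean inequality $(t+n+1) \le (t+1)\bigl((t+2)/(t+1)\bigr)^n$, equivalently $(t+n+1)(t+1)^{n-1} \le (t+2)^n$ for $n\ge 1$ (the case $n=0$ being immediate since both sides of the original are $t$).

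The main obstacle will be verifying this last polynomial inequality, and here the most transparent route is convexity or the binomial theorem: writing $(t+2)^n = \bigl((t+1)+1\bigr)^n \ge (t+1)^n + n(t+1)^{n-1} = (t+1)^{n-1}(t+1+n)$ by keeping only the first two terms of the binomial expansion (all dropped terms being nonnegative). This yields exactly $(t+2)^n \ge (t+n+1)(t+1)^{n-1}$, completing the step. An alternative, perhaps even cleaner, approach avoids induction entirely by comparing the sum to an integral: since $x^n$ is increasing, $\sum_{k=1}^{t}k^n \le \int_{1}^{t+1} x^n\,dx + \text{correction}$, but the discrete binomial argument above is self-contained and avoids integral remainder estimates, so I would favor the induction. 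I expect no serious difficulty beyond bookkeeping; the binomial lower bound on $(t+2)^n$ is the one genuinely load-bearing inequality.
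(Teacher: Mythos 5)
Your proof is correct and follows essentially the same route as the paper: induction on $t$ with base case $(n+1)\le 2^n$, and an inductive step that reduces to the key inequality $(t+2)^n \ge (t+1)^{n-1}(t+n+1)$, established via the first two terms of the binomial expansion (the paper phrases this as $\left(1+(t+1)^{-1}\right)^n \ge 1+n(t+1)^{-1}$, which is the same estimate). No gaps; the bookkeeping matches the paper's argument step for step.
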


\begin{lemma}
\label{lemma2}
Let $n \ge 0$ be an integer. Then, it holds that
\begin{equation}
\label{lm2}
(n+1)\sum_{k=1}^{t}k^n \ge (t+1)^{n+1}-\frac{(n+1)^2}{2n+1}(t+1)^n \enspace .
\end{equation}
\end{lemma}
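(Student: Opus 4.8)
The plan is to bound $\sum_{k=1}^t k^n$ from below by comparing it with the integral of the convex function $x^n$, using the trapezoidal overestimate valid for convex functions. I would first dispose of the trivial case $n=0$, where both sides equal $t$ and the inequality holds with equality. For $n\ge 1$ the function $x^n$ is convex on $[0,\infty)$, so on each interval $[k,k+1]$ the chord joining the endpoints lies above the graph, whence $\int_k^{k+1} x^n\,dx \le \tfrac12\bigl(k^n+(k+1)^n\bigr)$ (the right Hermite--Hadamard inequality, i.e.\ the trapezoidal rule overshoots for convex integrands).

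Next I would sum this per-interval estimate over $k=1,\dots,t$. By additivity of the integral the left-hand side becomes $\int_1^{t+1}x^n\,dx = \bigl((t+1)^{n+1}-1\bigr)/(n+1)$, while the right-hand side collapses to $\sum_{k=1}^t k^n + \tfrac12\bigl((t+1)^n-1\bigr)$ after re-indexing the shifted sum $\sum_{k=1}^t(k+1)^n=\sum_{k=2}^{t+1}k^n$. Rearranging and multiplying through by $n+1$ yields the intermediate bound $(n+1)\sum_{k=1}^t k^n \ge (t+1)^{n+1} - \tfrac{n+1}{2}(t+1)^n + \tfrac{n-1}{2}$. Summing over $k=1,\dots,t$ rather than $k=0,\dots,t-1$ is precisely what produces an estimate naturally expressed in terms of $t+1$, matching the form of the claim.

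It then remains to check that this intermediate bound is at least the target $(t+1)^{n+1} - \tfrac{(n+1)^2}{2n+1}(t+1)^n$. This reduces to verifying $\tfrac{(n+1)^2}{2n+1} \ge \tfrac{n+1}{2}$, equivalently $2n+2\ge 2n+1$, together with $\tfrac{n-1}{2}\ge 0$ for $n\ge 1$; both hold, so the coefficient of $(t+1)^n$ in my bound is no more negative than in the claim and the leftover constant is nonnegative. This finishes the argument.

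The main obstacle is really only the first step: making precise and justifying the trapezoidal (Hermite--Hadamard) per-interval inequality for $x^n$, which is exactly where convexity enters; everything afterward is additivity, a single re-indexing, and an elementary coefficient comparison. An alternative that avoids calculus is induction on $t$, where the inductive step reduces to the pointwise inequality $(n+1)t^n + \tfrac{(n+1)^2}{2n+1}\bigl((t+1)^n - t^n\bigr) \ge (t+1)^{n+1}-t^{n+1}$; however, expanding the binomials there is messier than the convexity route, so I would favor the integral comparison.
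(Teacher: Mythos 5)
Your proof is correct, and it takes a genuinely different route from the paper. The paper proves the lemma by induction on $t$: the base case $t=1$ reduces to $(n+1)(2n+1)\ge 2^n\left(1-n(n-2)\right)$, and the inductive step is reduced to showing a function $F(t)\ge 0$, which is done by expanding $F(t)/t^n$ via the binomial formula and verifying term by term that every coefficient $\frac{(n+1)2^l+n}{2n+1}-\frac{2^{l+1}-1}{l+1}$ is nonnegative --- exactly the kind of ``messier'' binomial bookkeeping you anticipated and chose to avoid. Your route instead uses the right Hermite--Hadamard (trapezoidal) inequality for the convex function $x^n$, sums it over $[1,t+1]$, and reduces the lemma to the coefficient comparison $\frac{(n+1)^2}{2n+1}\ge\frac{n+1}{2}$ together with $\frac{n-1}{2}\ge 0$; I checked the re-indexing and the resulting intermediate bound
\begin{equation*}
(n+1)\sum_{k=1}^{t}k^n \ \ge\ (t+1)^{n+1}-\frac{n+1}{2}(t+1)^n+\frac{n-1}{2}
\end{equation*}
and both it and the final comparison are right (note that the comparison even survives at $n=0$, since the deficit $-\frac{1}{2}$ is exactly cancelled by $\frac{1}{2}(t+1)^0$, though your separate treatment of $n=0$ is equally fine). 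What each approach buys: yours is shorter, conceptually transparent, and avoids induction entirely, at the price of invoking an integral-comparison fact; the paper's is fully elementary and self-contained (no calculus), which fits its appendix style where all three lemmas are proved by the same inductive template, and its Lemma 3 proof actually reuses Lemma 2 in the same discrete spirit.
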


\begin{lemma}
\label{lemma3}
Let $n \ge 0$ be an integer. Then, it holds that
\begin{equation}
\label{lm3}
(2n+1)t \sum_{k=1}^{t}k^{2n} \le (n+1)^2\left(\sum_{k=1}^{t}k^n \right)^2 \enspace .
\end{equation}
\end{lemma}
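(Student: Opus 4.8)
The first thing I would record is that the inequality is a \emph{discrete} sharpening of an \emph{identity}: replacing the sums by the integrals $\int_0^t x^n\,dx=t^{n+1}/(n+1)$ and $\int_0^t x^{2n}\,dx=t^{2n+1}/(2n+1)$ turns both sides into $t^{2n+2}$, so the leading terms cancel and the entire content of the lemma lives at the \emph{subleading} order. This tells me at once that crude estimates cannot work: Cauchy--Schwarz only yields the reverse bound $S_n^2\le t\,S_{2n}$, and min/max (Popoviciu/P\'olya--Szeg\H{o}) bounds on the spread of the $k^n$ blow up because the $k^n$ range over a huge interval. I must instead control the power sums exactly one order beyond their leading behaviour. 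Throughout write $S_p\equiv\sum_{k=1}^t k^p$.

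A clean way to expose this structure is the variance identity $t\,S_{2n}-S_n^2=\frac12\sum_{i,j}(i^n-j^n)^2=\sum_{1\le i<j\le t}(j^n-i^n)^2=:V$. Since $(n+1)^2-(2n+1)=n^2$, substituting $t\,S_{2n}=S_n^2+V$ recasts the claim in the equivalent and more transparent form
\[
n^2\,S_n^2\ \ge\ (2n+1)\,V .
\]
This form makes the constants natural, exhibits the equality at $n=0$ (where $V=0$), and hands me a clean induction base at $t=1$, namely $V=0$ and $S_n=1$, so the difference equals $n^2\ge0$.

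For the estimate itself I would use convexity of $x\mapsto x^p$. The trapezoidal rule overestimates $\int_0^t x^n\,dx$, giving the lower bound $S_n\ge \frac{t^{n+1}}{n+1}+\frac{t^n}{2}$; the midpoint rule underestimates $\int_0^t x^{2n}\,dx$, giving the upper bound $S_{2n}\le \frac{1}{2n+1}(t+\frac12)^{2n+1}$. Both are tight to two orders, exactly matching the subleading order the lemma needs. Substituting them into $(n+1)^2S_n^2\ge (2n+1)t\,S_{2n}$ collapses everything to the single elementary inequality
\[
(2t)^{2n-1}(2t+n+1)^2\ \ge\ (2t+1)^{2n+1},
\]
a one-variable statement amenable to a logarithmic/Bernoulli estimate.

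The step I expect to be the real obstacle is reconciling these asymptotically sharp bounds with the behaviour at small $t$. The reduced inequality above is comfortably true once $t$ is large, but it \emph{fails} for an initial range of $t$ whose length grows with $n$ (it already breaks at $t=1$ for $n=2$, and up to $t=4$ for $n=3$), precisely because the two-term integral bounds discard slack that the genuine inequality carries most heavily at small $t$, where the gap is as large as $n^2$. I would close this either by retaining one further (third-order) correction in the midpoint bound for $S_{2n}$, or, more robustly, by running the argument as an induction on $t$ anchored on the clean base value $n^2$ from the variance form, using the integral bounds only in the large-$t$ tail and checking the finite initial segment directly. Producing a single uniform estimate that is simultaneously valid and tight enough for all $t\ge1$ and all $n$ is where the care is required.
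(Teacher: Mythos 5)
Your two reformulations are correct as far as they go: the variance identity $tS_{2n}-S_n^2=\sum_{i<j}(j^n-i^n)^2$ (writing $S_p=\sum_{k=1}^t k^p$ as you do) does recast the claim as $n^2S_n^2\ge(2n+1)V$, and the trapezoid/midpoint substitution does collapse it to $(2t)^{2n-1}(2t+n+1)^2\ge(2t+1)^{2n+1}$. But the proposal stops exactly where the lemma's difficulty begins, and the hole is substantially wider than you estimate. The logarithm of the ratio of the two sides of your reduced inequality equals $2\ln\bigl(1+\frac{n+1}{2t}\bigr)-(2n+1)\ln\bigl(1+\frac{1}{2t}\bigr)$; along any ray $n=st$ with fixed $s>0$ this tends to $2\ln(1+s/2)-s<0$, so the reduced inequality fails for \emph{all} large $t$ on every such ray, and a second-order expansion places the failure region at roughly $t\le(2n^2+2n+1)/4$ — quadratic in $n$ (for $n=10$ it still fails at $t=50$). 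Consequently the ``finite initial segment'' you propose to ``check directly'' is an infinite family of pairs $(n,t)$, and handling it uniformly in $n$ is not a verification but is essentially the lemma itself. Your first suggested repair cannot work even in principle: when $t\ll n$ the sum $S_{2n}$ is dominated by its top term $t^{2n}$, whereas the midpoint bound $(t+\frac12)^{2n+1}/(2n+1)$ exceeds it by a factor of order $te^{n/t}/(2n+1)$, unbounded in $n$, so no fixed-order correction to the integral approximation is uniformly adequate. Your second repair, ``run the argument as an induction on $t$,'' names a strategy without content — the integral bounds are not of inductive form, and you never formulate what the induction step would be.

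The missing idea, which is how the paper proceeds, is to let an induction on $t$ replace any direct upper bound on $S_{2n}$: the induction hypothesis itself supplies the control of $S_{2n}$ that your midpoint bound cannot deliver in the regime $t\ll n$, and the inductive step then involves only the much better-behaved quantity $S_n$. Concretely, the paper expands both sides at $t+1$, invokes the upper bound $(n+1)S_n\le t(t+1)^n$ of Lemma \ref{lemma1}, and finds that the step reduces exactly to the lower bound $(n+1)S_n\ge(t+1)^{n+1}-\frac{(n+1)^2}{2n+1}(t+1)^n$ of Lemma \ref{lemma2}; both lemmas are proved by elementary induction with binomial estimates valid for all $n$ and $t$ simultaneously. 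Note that Lemma \ref{lemma2} is even \emph{weaker} than your trapezoid bound when $t\ll n$ (its right-hand side is then negative) — uniformity is achieved not by sharpening the two-term estimates, which is impossible for $S_{2n}$, but by arranging the argument so that only estimates on $S_n$, in powers of $t+1$, are ever needed. If you wish to salvage your outline, that structural change (or an equally uniform substitute for it in the region $t\le cn^2$) is what has to be supplied; the variance form, while a nice way to see where the constants come from, does not by itself provide it.
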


Now we are ready to move on with the analysis of the variable shrinking case.
\begin{theorem}
\label{theorem2}
The margin perceptron with variable shrinking converges in a finite number $t_{\rm c}$ of updates satisfying the bound
\begin{equation}
\label{tbound2}
t_{\rm c} \le t_{\rm b} \equiv \frac{(n+1)^2}{2n+1}\left(1+\frac{2b}{\eta R^2} \right)\frac{R^2}{\gamma^2_{\rm d}}.
\end{equation}
Moreover, the zero-threshold solution hyperplane possesses margin $\gamma^{\prime}_{\rm d}$ which is a fraction $f$ of the maximum margin $\gamma_{\rm d}$ obeying the inequality 
\begin{equation}
\label{fbound2}
f \equiv \frac{\gamma^{\prime}_{\rm d}}{\gamma_{\rm d}} >\frac{2n+1}{2n+2} \left(1+\frac{\eta R^2}{2b}\right)^{-1}.
\end{equation}
Finally, an after-run lower bound on $f$ involving the margin $\gamma^{\prime}_{\rm d}$ achieved, the length $\left\| \vec a_{t_{\rm c}} \right\|$ of the solution weight vector $\vec a_{t_{\rm c}}$ and the number $t_{\rm c}$ of updates is given by
\begin{equation}
\label{fboundafter2}
f \ge \eta \sum_{k=1}^{t_{\rm c}}k^n  \frac {\gamma^{\prime}_{\rm d}}{\left\| \vec a_{t_{\rm c}} \right\|}.
\end{equation}
\end{theorem}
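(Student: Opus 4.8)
The plan is to follow the same two-sided bracketing of $\left\| \vec{a}_t \right\|$ that drives Theorem~\ref{theorem1}, adapted to the fact that here the effective learning rate $\eta(t+1)^n$ grows with the update count. First I would project the update (\ref{update22}) onto the optimal direction $\vec{u}$: taking the inner product and invoking (\ref{gamma}) gives $\vec{u}\cdot\vec{a}_{t+1}-\vec{u}\cdot\vec{a}_t=\eta(t+1)^n\,\vec{u}\cdot\vec{y}_k\ge\eta(t+1)^n\gamma_{\rm d}$, so that telescoping from $\vec{a}_0=\vec{0}$ yields the lower bound $\left\| \vec{a}_t \right\|\ge\vec{u}\cdot\vec{a}_t\ge\eta\gamma_{\rm d}\sum_{k=1}^{t}k^n$. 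In parallel, expanding $\left\| \vec{a}_{t+1} \right\|^2-\left\| \vec{a}_t \right\|^2$ from (\ref{update22}), bounding the cross term through the misclassification condition (\ref{cond22}) and using $\left\| \vec{y}_k \right\|\le R$, I obtain the per-step increment $(t+1)^{2n}(\eta^2R^2+2\eta b)$, whence $\left\| \vec{a}_t \right\|^2\le(\eta^2R^2+2\eta b)\sum_{k=1}^{t}k^{2n}$.

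For the convergence bound I would square the lower bound and compare it with the upper bound, obtaining $\eta^2\gamma_{\rm d}^2\bigl(\sum_{k=1}^{t}k^n\bigr)^2\le(\eta^2R^2+2\eta b)\sum_{k=1}^{t}k^{2n}$. This is exactly where Lemma~\ref{lemma3} does the real work: it lets me replace $\bigl(\sum_{k=1}^{t}k^n\bigr)^2$ by $\frac{(2n+1)t}{(n+1)^2}\sum_{k=1}^{t}k^{2n}$, after which the common factor $\sum_{k=1}^{t}k^{2n}$ cancels and the inequality collapses to a bound linear in $t$, namely $t\le\frac{(n+1)^2}{2n+1}\frac{\eta^2R^2+2\eta b}{\eta^2\gamma_{\rm d}^2}$, which is precisely $t_{\rm b}$ of (\ref{tbound2}).

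For the margin statements I would first read off the after-run bound (\ref{fboundafter2}) directly from the lower bound: since $\gamma_{\rm d}\le\left\| \vec{a}_{t_{\rm c}} \right\|/(\eta\sum_{k=1}^{t_{\rm c}}k^n)$, we immediately have $f=\gamma^{\prime}_{\rm d}/\gamma_{\rm d}\ge\eta\sum_{k=1}^{t_{\rm c}}k^n\,\gamma^{\prime}_{\rm d}/\left\| \vec{a}_{t_{\rm c}} \right\|$. For the before-run bound (\ref{fbound2}) I would use that, upon termination, no pattern satisfies (\ref{cond22}), so $\gamma^{\prime}_{\rm d}>b(t_{\rm c}+1)^n/\left\| \vec{a}_{t_{\rm c}} \right\|$; chaining this with the after-run bound and the upper bound on $\left\| \vec{a}_{t_{\rm c}} \right\|^2$ gives $f>\frac{b(t_{\rm c}+1)^n\sum_{k=1}^{t_{\rm c}}k^n}{(\eta R^2+2b)\sum_{k=1}^{t_{\rm c}}k^{2n}}$. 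It then remains to show that the power-sum ratio $\bigl((t_{\rm c}+1)^n\sum_{k=1}^{t_{\rm c}}k^n\bigr)/\sum_{k=1}^{t_{\rm c}}k^{2n}$ is at least $(2n+1)/(n+1)$, which follows by applying Lemma~\ref{lemma3} to dispose of $\sum_{k=1}^{t_{\rm c}}k^{2n}$ and then Lemma~\ref{lemma1} to bound $t_{\rm c}(t_{\rm c}+1)^n/\sum_{k=1}^{t_{\rm c}}k^n\ge n+1$; substituting produces exactly the constant $\frac{2n+1}{2n+2}\left(1+\eta R^2/(2b)\right)^{-1}$.

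I expect the only genuine obstacle to be the bookkeeping with the power sums rather than anything conceptual: the growing learning rate makes the accumulated ``noise'' scale as $\sum_{k=1}^{t}k^{2n}$ while the ``signal'' scales as $\bigl(\sum_{k=1}^{t}k^n\bigr)^2$, and the whole argument hinges on the sharp comparison of these two quantities provided by the lemmas. The delicate point is to orchestrate the lower norm bound, the upper norm bound, the termination inequality, and the two power-sum lemmas in the correct order, so that $\gamma_{\rm d}$ cancels and the messy sums reduce to the clean rational constants appearing in (\ref{tbound2}) and (\ref{fbound2}); making sure each replacement weakens the target in the intended direction is where care is needed. In the organization above only Lemmas~\ref{lemma1} and \ref{lemma3} are invoked, whereas Lemma~\ref{lemma2}, which controls $\sum_{k=1}^{t}k^n$ from below, would be the natural tool for complementing these estimates with a lower bound on the number of updates or for checking the sharpness of the constants.
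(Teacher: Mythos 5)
Your proposal is correct, and for the convergence bound (\ref{tbound2}) and the after-run bound (\ref{fboundafter2}) it coincides with the paper's proof: the same projection onto $\vec u$ giving $\left\| \vec a_t \right\| \ge \eta\gamma_{\rm d}\sum_{k=1}^{t}k^n$, the same telescoped upper bound (\ref{upbound2}), and the same single application of Lemma~\ref{lemma3} to extract $t_{\rm b}$. Where you genuinely deviate is the before-run bound (\ref{fbound2}). The paper squares the termination inequality $\gamma^{\prime}_{\rm d} > b(t_{\rm c}+1)^n/\left\| \vec a_{t_{\rm c}} \right\|$, bounds $\sum_{k=1}^{t_{\rm c}}k^{2n}$ by $t_{\rm c}(t_{\rm c}+1)^{2n}/(2n+1)$ via Lemma~\ref{lemma1} applied with exponent $2n$, obtains $f^2 > (2n+1)b^2/\left(\gamma_{\rm d}^2(\eta R^2+2b)\eta t_{\rm c}\right)$, and then overapproximates $t_{\rm c}$ by $t_{\rm b}$; the factor $1/\gamma_{\rm d}^2$ hidden inside $t_{\rm b}$ is what cancels the explicit $\gamma_{\rm d}^2$ and produces the clean constant. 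You instead chain the termination inequality through the after-run bound, arriving at $f > b(t_{\rm c}+1)^n\sum_{k=1}^{t_{\rm c}}k^n/\bigl((\eta R^2+2b)\sum_{k=1}^{t_{\rm c}}k^{2n}\bigr)$, and then settle the purely combinatorial claim $(t_{\rm c}+1)^n\sum_{k=1}^{t_{\rm c}}k^n/\sum_{k=1}^{t_{\rm c}}k^{2n} \ge (2n+1)/(n+1)$ by Lemma~\ref{lemma3} followed by Lemma~\ref{lemma1} with exponent $n$; I checked that every replacement goes in the right direction, so this is a valid alternative. Your route has the advantage of never reintroducing $\gamma_{\rm d}$ or the explicit value of $t_{\rm b}$ into the margin argument (the cancellation is automatic, and the result does not rest on the mistake bound), at the cost of a second use of Lemma~\ref{lemma3}. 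The paper's route, by contrast, produces the intermediate inequality (\ref{frac2}) in terms of $t_{\rm c}$, which is then reused in Remark~\ref{rm5} to derive a lower bound on the number of updates --- exactly the complementary estimate you suggest Lemma~\ref{lemma2} would serve; note also that Lemma~\ref{lemma2} is not actually idle in either treatment, since the paper's proof of Lemma~\ref{lemma3} reduces to it.
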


\begin{proof}
Taking the inner product of (\ref{update22}) with the optimal direction $\vec u$ and using (\ref{gamma}) we get
\[
\vec u \cdot \vec a_{t+1} -\vec u \cdot \vec a_{t} =\eta (t+1)^n \vec u \cdot \vec y_k \ge \eta (t+1)^n \gamma_{\rm d}
\]
a repeated application of which, taking into account that $\vec a_0= \vec 0$, gives 
\begin{equation}
\label{lbound2}
\left\| \vec a_t \right\| \ge \vec u \cdot \vec a_t \ge \eta \gamma_{\rm d}\sum_{k=1}^{t}k^n  \enspace.
\end{equation}
From (\ref{update22}) and (\ref{cond22}) we obtain
\[
\left\|\vec a_{t+1}\right\|^2 -\left\|\vec a_t \right\|^2=\eta^2 (t+1)^{2n} \left\|\vec y_k \right\|^2 +2\eta (t+1)^n \vec a_t \cdot \vec y_k \le  (\eta^2 R^2+2\eta b )(t+1)^{2n} \enspace.
\]
A repeated application of the above inequality, assuming $\vec a_0= \vec 0$, leads to
\begin{equation}
\label{upbound2}
\left\|\vec a_{t}\right\|^2 \le (\eta^2 R^2+2\eta b )\sum_{k=1}^{t}k^{2n} \enspace.
\end{equation}
Combining (\ref{lbound2}) and (\ref{upbound2}) we obtain
\begin{equation}
\label{tbound22}
\eta^2\gamma_{\rm d}^2\left(\sum_{k=1}^{t}k^n \right)^2 \le \left\|\vec a_{t}\right\|^2 \le (\eta^2 R^2+2\eta b )\sum_{k=1}^{t}k^{2n}
\end{equation}
or
\[
t \le \left( \frac{R^2+2b/\eta}{\gamma_{\rm d}^2}\right)\left(t\sum_{k=1}^{t}k^{2n}\right)\left(\sum_{k=1}^{t}k^n \right)^{-2}
\]
which by virtue of (\ref{lm3}) gives (\ref{tbound2}).

Upon convergence of the algorithm in $t_{\rm c}$ updates condition (\ref{cond22}) is violated by all patterns. Therefore, the margin $\gamma^{\prime}_{\rm d}$ achieved satisfies $\gamma^{\prime}_{\rm d} > {b(t_{\rm c}+1)^n}/{\left\| \vec a_{t_{\rm c}} \right\|}$.  
Thus,
\begin{equation}
\label{frac2}
f^2=\frac{{\gamma^{\prime}_{\rm d}}^2}{\gamma_{\rm d}^2}>
\frac{b^2(t_{\rm c}+1)^{2n}}{\gamma_{\rm d}^2\left\| \vec a_{t_{\rm c}} \right\|^2} \ge \frac{b^2(t_{\rm c}+1)^{2n}}{\gamma_{\rm d}^2(\eta^2 R^2+2\eta b )\sum_{k=1}^{t_{\rm c}}k^{2n}}\ge  \frac{(2n+1)b^2}{\gamma_{\rm d}^2(\eta R^2+2b )\eta t_{\rm c}} \enspace.
\end{equation}
Here we replaced $\left\|\vec a_{t_{\rm c}}\right\|^2$ with its upper bound $(\eta^2 R^2+2\eta b )\sum_{k=1}^{t_{\rm c}}k^{2n}$ from (\ref{upbound2}) and $\sum_{k=1}^{t_{\rm c}}k^{2n}$ with its upper bound $t_{\rm c}(t_{\rm c}+1)^{2n}/(2n+1)$ from (\ref{lm1}). Overapproximating $t_{\rm c}$ by $t_{\rm b}$ in (\ref{frac2}) and substituting the value of the latter from (\ref{tbound2}) we get
\[
f^2 > \frac{(2n+1)b^2}{\gamma_{\rm d}^2(\eta R^2+2b )\eta t_{\rm b}}=\left(\frac{(2n+1)}{(n+1)}\frac{b}{(\eta R^2+2b)}\right)^2 \nonumber
\]
from where by taking the square root we obtain (\ref{fbound2}).

Finally, (\ref{fboundafter2}) is readily obtained if in the ratio ${\gamma^{\prime}_{\rm d}}/{\gamma_{\rm d}}$ we employ the upper bound on $\gamma_{\rm d}$ derivable from (\ref{lbound2}). \qed
\end{proof}

\begin{remark}
\label{rm4}
Let us define $\delta \equiv {\eta R^2} /{b}$ and $\epsilon \equiv (n+1)^{-1}$. Then, (\ref{tbound2}) and (\ref{fbound2}) become
\[
t_{\rm c} \le \frac{1}{\epsilon\delta}\left(\frac{1+{\delta}/{2}}{1-{\epsilon}/{2}}\right)\frac{R^2}{\gamma^2_{\rm d}}
\]
and
\[
f>\frac{1-{\epsilon}/{2}}{1+{\delta}/{2}} \enspace,
\]
respectively. The perceptron with margin corresponds to $n=0$ or $\epsilon=1$. If we choose $\delta \ll 1$ while keeping $\epsilon$ (i.e., $n$) fixed the upper bound on the number of updates becomes $O \left(\delta^{-1}R^2/\gamma_{\rm d}^2 \right)$ and the before-run lower bound on $f$ approaches as $\delta \to 0$ the value $1-\epsilon/2$. Then, by allowing $\epsilon \to 0$ (i.e., $n \to \infty$) maximum margin solutions are approximated. If we set, instead, $\delta=\epsilon \ll 1$ then $f>1-\epsilon$ and the algorithm achieves ``accuracy'' $\epsilon$ in at most $\epsilon^{-2}R^2/\gamma_{\rm d}^2+O \left(\epsilon^{-1}R^2/\gamma_{\rm d}^2 \right)$ updates. This is among the best bounds of perceptron-like approximate maximum margin classifiers which do not assume knowledge of the value of $\gamma_{\rm d}$ in any way. For comparison, ALMA's bound is $\simeq 8\epsilon^{-2}R^2/\gamma_{\rm d}^2$.
\end{remark}

\begin{remark}
\label{rm5}
Given that $f^2 \le 1$ (\ref{frac2}) leads to a lower bound on the number $t_{\rm c}$ of updates required for convergence of the margin perceptron with variable shrinking which in terms of the parameters $\delta$ and $\epsilon$ reads
\[
t_{\rm c} > \frac{1}{\epsilon\delta}\left(\frac{1-{\epsilon}/{2}}{1+{\delta}/{2}}\right)\frac{R^2}{\gamma^2_{\rm d}} \enspace.
\]
As $\delta, \epsilon \to 0$ the ratio of the above lower bound to the upper bound tends to $1$ and the algorithm approaches the optimal solution in $\simeq(\epsilon \delta)^{-1}{R^2}/{\gamma^2_{\rm d}}$ updates.
\end{remark}

\begin{remark}
\label{rm6}
Theorems \ref{theorem1} and \ref{theorem2} hold also for the algorithms described by (\ref{update}) and (\ref{cond}) as appropriate provided, of course, that $\left\| \vec a_{t_{\rm c}} \right\|$ is replaced in (\ref{fboundafter1}) and (\ref{fboundafter2}) with $\left\| \vec a^s_{t_{\rm c}} \right\|$ by making use of the relation connecting these two quantities.
\end{remark}

\begin{remark}
\label{afterrun}
The after-run lower bounds on $f$ given by (\ref{fboundafter1}) and (\ref{fboundafter2}) typically provide estimates of the margin achieved which are much more accurate than the ones obtained from the before-run bounds of (\ref{fbound1}) and (\ref{fbound2}), respectively. Our experience based on such estimates suggests that a satisfactory approximation of the maximum margin solution can be obtained without the need to resort to very small values of the parameter $\epsilon$. In other words, although the theoretically guaranteed before-run fraction of the maximum margin for $\delta \ll 1$ is close to $1-\epsilon/2$ both the estimated after-run fraction and the one actually achieved are larger. This is a generic feature of the perceptron with margin and its generalizations. It turns out that in most cases $\epsilon \simeq 0.2-0.3$ is sufficiently small for the algorithm to obtain for $\delta \ll 1$ solutions possessing $99 \%$ of the maximum margin. Thus, for constant shrinking a very accurate knowledge of the value of $\gamma_{\rm d}$ is not required while for variable shrinking very low values of $n$ are sufficient. 
\end{remark}

\section{Implementation and Experiments}
To reduce the computational cost we adopt a two-member nested sequence of reduced ``active sets" of data points as described in detail in \cite{PT1}. The parameter $\bar c$ which multiplies the threshold of the misclassification condition when this condition is used to select the points of the first-level active set is given the value $\bar c=1.01$. The parameters, instead, determining the number of times the active sets are presented to the algorithm are set to the values $N_{\rm ep_1}=N_{\rm ep_2}=5$.

An additional mechanism providing a substantial improvement of the computational efficiency is the one of performing multiple updates \cite{PT2,PT1,PT3} once a data point is presented to the algorithm. It is understood, of course, that a multiple update should be equivalent to a certain number of updates occurring as a result of repeatedly presenting to the algorithm the data point in question. Thus, the maximal multiplicity of such an update will be determined by the requirement that the pattern $\vec y_k$ which satisfies the misclassification condition will just violate it as a result of the multiple update. For constant shrinking a multiple update is 
\[
\vec{a}_{t+\mu}=\vec{a}_{t}+ \frac{1-(1-\eta \lambda)^\mu }{\lambda(1-\eta\lambda)^{t+\mu-1}}\vec{y}_k 
\]
with
\[
\mu \le \left[\frac{1}{\ln (1-\eta \lambda)^{-1}} \ln\left(1+\lambda \frac{b-(1-\eta \lambda)^{t-1} \vec a_t \cdot \vec y_k }{\left\|\vec y_k \right\|^2-\lambda b}\right)\right]+1  \enspace.
\]
Here $[x]$ is the integer part of $x\ge 0$. For variable shrinking, instead, finding the maximal multiplicity of the update involves solving a (n+1)-th degree equation for which there is no general formula unless $n \le 3$. However, this does not pose a serious problem for several reasons.  First of all, as we have already pointed out in Remark \ref{afterrun}, we may reach very good approximations of the maximal margin hyperplane with low values of $n$. In addition, even if we choose a larger $n$ we may obtain satisfactory performance with updates having multiplicity lower than the maximal one. Thus, it suffices to find a lower bound on the relevant root of the (n+1)-th degree equation. Moreover, even when the exact root is available it is often preferable to set an upper bound $\ell_{\rm {up}}$ on the multiplicity of the updates.

The aim of our experiments is to assess the ability of the margin perceptron with constant shrinking (MPCS) and the margin perceptron with variable shrinking (MPVS) to achieve fast convergence to a certain approximation of the optimal solution in the feature space where the patterns are linearly separable. For linearly separable data the feature space is the initial instance space. For inseparable data, instead, a space extended by $m$ dimensions, as many as the instances, is considered where each instance is placed at a distance $\Delta$ from the origin in the corresponding dimension\footnote{$\vec y_k=[ \bar{\vec y}_k, l_k\Delta \delta _{1k}, \dots , l_k \Delta \delta _{mk}]$, where $\delta _{ij}$ is Kronecker's $\delta$ and $\bar{\vec y}_k$ the projection of the $k^{\rm {th}}$ extended instance $\vec y_k$ (multiplied by its label $l_k$) onto the initial instance space. The feature space mapping defined by the extension commutes with a possible augmentation (with parameter $\rho$) in which case $\bar{\vec y}_k=[l_k  \bar{\vec x}_k, l_k \rho]$. Here $\bar{\vec x}_k$ represents the $k^{\rm {th}}$ data point.} \cite{FS}. This extension generates a margin of at least $\Delta/\sqrt{m}$ and its employment relies on the well-known equivalence between the hard margin optimization in the extended space and the soft margin optimization in the initial instance space with objective function $\left\|{\vec w}\right\|^2+ \Delta^{-2} {\sum_i} {{\xi}_i}^2$ involving the weight vector $\vec w$ and the 2-norm of the slacks ${\xi}_i$ \cite{CST}.

In the experiments the augmentation parameter $\rho$ was set to the value $\rho=1$. The values of the parameter $\Delta$ together with the number of instances and attributes of the datasets used are given in Table 1. Further details may be found in \cite{PT3}. The experiments, like the ones of \cite{PT3}, were conducted on a 2.5 GHz Intel Core 2 Duo processor with 3 GB RAM running Windows Vista. Therefore, the runtimes reported here can be directly compared to the ones of \cite{PT3}. Our codes written in C++ were compiled using the g++ compiler under Cygwin. They are available at \url{http://users.auth.gr/costapan}.

\begin{table*}[t]
\caption{Results of experiments with the algorithms MPCS and  MPVS.}
\label{Table1}
\centering
\end{table*}
\begin{figure}[t]
\vspace{-15pt}
\epsfig{file=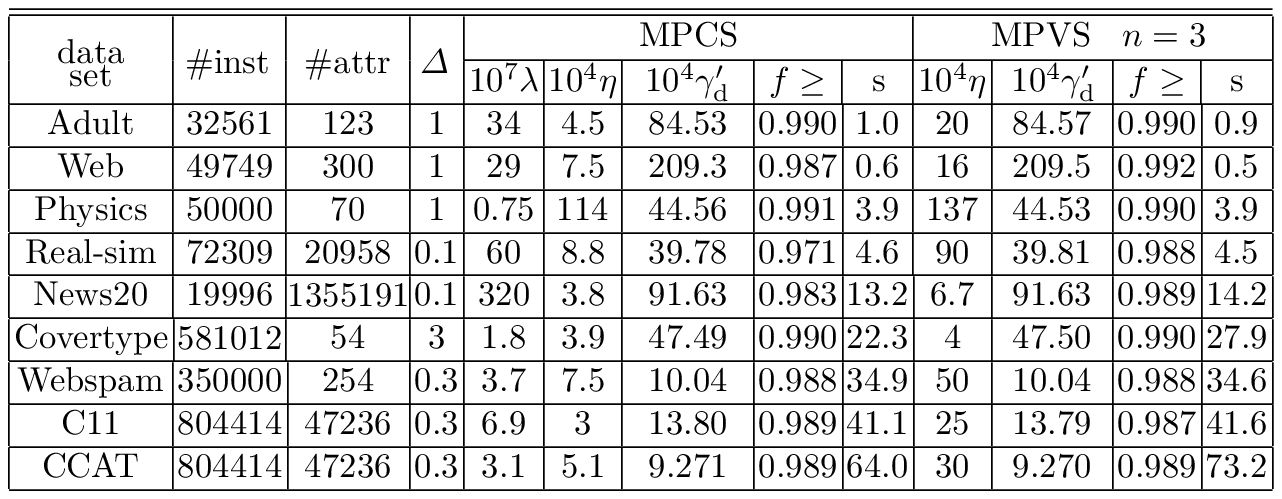, width=1\textwidth}
\vspace{-15pt}
\end{figure}
In the numerical experiments the results of which we report in Table 1 the algorithms MPCS and MPVS were required to obtain solutions possessing $99 \%$ of the maximum margin $\gamma_{\rm d}$. Additionally, we imposed a cut-off value $\ell_{\rm {up}}=1000$ on the multiplicity of the updates. We set $b=R^2$ such that $\delta=\eta R^2/b=\eta$ for both algorithms. For MPCS assuming knowledge of $\gamma_{\rm d}$ we chose $\lambda \simeq 0.75\gamma^2_{\rm d}/b$ such that $\epsilon \simeq 0.25$. In the case of MPVS we set $n=3$ giving $\epsilon=(n+1)^{-1}=0.25$. Thus, for both algorithms the asymptotic value of the theoretically guaranteed fraction of $\gamma_{\rm d}$ that they were able to achieve in the limit $\delta \to 0$ was $1-\epsilon/2 \simeq 0.875$. The lower bound on the fraction $f$ reported is the after-run bound of (\ref{fboundafter1}) and (\ref{fboundafter2}) which turns out in most cases to be $\simeq 0.99$ and certainly much larger than the before-run fraction $\simeq 0.875$ in accordance with our earlier discussion in Remark \ref{afterrun}. The required value of the margin was achieved by sufficiently lowering the value of $\eta$ having knowledge of the target value. However, even if such a knowledge were not available we could have reached our goal guided by the after-run lower bound on $f$. From Table 1 we see that the runtimes (in seconds) of MPCS and MPVS for the same value $\gamma^{\prime}_{\rm d}$ of the margin achieved are comparable. More important, though, is a comparison with the results obtained with other large margin classifiers as reported in \cite{PT3}. We see that MPCS and MPVS are orders of magnitude faster than ROMMA and ${\rm SVM}^{\rm light}$ \cite{Joa}, faster than PDM and of comparable speed or at most about 2 times slower than the linear SVM algorithms DCD \cite{HCL} and MPU \cite {PT2}. We should note, however, that unlike our algorithms linear SVMs are not primal and strictly online.

Finally, we would like to point out that in practice it is possible to set at one stage the parameter $\lambda$ of MPCS without prior knowledge of the value of $\gamma_{\rm d}$. A preliminary run of MPCS with an almost vanishing $\lambda$ provides a lower bound on $\gamma_{\rm d}$ which is the margin $\gamma^{\prime}_{\rm d}$ achieved and an upper bound from the after-run lower bound on $f$. Actually, $\gamma_{\rm d}$ usually lies closer to its upper bound. This information is sufficient to choose $\lambda$ given that the algorithm is not extremely sensitive to this choice provided, of course, that $\lambda$ remains below its maximal allowed value.   

\section{Conclusions}
Motivated by the presence of weight shrinking in most attempts at solving the L1-SVM problem via stochastic gradient descent we introduced this feature into the classical perceptron algorithm with margin. In the case of constant weight decay parameter $\lambda$ and constant learning rate we demonstrated that convergence to solutions with approximately maximum margin requires $\lambda$ to approach a margin-dependent maximal allowed value. Scenarios with variable shrinking strength were also considered and proven not to be subject to such limitations. The theoretical analysis was corroborated by an experimental investigation with massive datasets which involved searching for large margin solutions in an extended feature space, a problem equivalent to the 2-norm soft margin one. As a final conclusion of our study we may say that shrinking of the current weight vector as a first step of the update is able to elevate the margin perceptron to a very effective primal online large margin classifier.

\appendix

\setcounter{equation}{0}
\renewcommand{\theequation}{\Alph{section}.\arabic{equation}}
\section{Proof of Lemma 1}
\begin{proof}
We proceed by induction in the integer $t$. For $t=1$ inequality (\ref{lm1}) reduces to $(n+1) \le 2^n$ which holds since $2^n=(1+1)^n \ge 1+n$. Now let us assume that (\ref{lm1}) holds and prove that $(n+1)\sum_{k=1}^{t+1}k^n \le (t+1)(t+2)^n$ or $(n+1)\left((t+1)^n+\sum_{k=1}^{t}k^n \right) \le (t+1)(t+2)^n$. Given that (\ref{lm1}) holds it suffices to prove that $(n+1)(t+1)^n+t(t+1)^n \le (t+1)(t+2)^n$ or that $(t+2)^n \ge(t+1)^{n-1}(n+1+t)$. Indeed, $(t+2)^n=(t+1)^n\left(1+(t+1)^{-1}\right)^n\ge (t+1)^n\left(1+n(t+1)^{-1}\right)=(t+1)^{n-1}(t+1+n)$. \qed
\end{proof}

\section{Proof of Lemma 2}
\begin{proof}
We proceed by induction in the integer $t$. For $t=1$ inequality (\ref{lm2}) reduces to $(n+1)(2n+1) \ge 2^n\left(1-n(n-2)\right)$ which holds $\forall n\ge 0$. Now let us assume that (\ref{lm2}) holds and prove that $\sum_{k=1}^{t+1}k^n \ge \frac{1}{n+1}(t+2)^{n+1}-\frac{n+1}{2n+1}(t+2)^n$. Using (\ref{lm2}) we have $\sum_{k=1}^{t+1}k^n=(t+1)^n+\sum_{k=1}^{t}k^n \ge (t+1)^n+\frac{1}{n+1}(t+1)^{n+1}-\frac{n+1}{2n+1}(t+1)^n=\frac{1}{n+1}(t+1)^{n+1}+\frac{n}{2n+1}(t+1)^n$. Thus, it suffices to prove that  $F(t) \equiv \frac{n+1}{2n+1}(t+2)^n+\frac{n}{2n+1}(t+1)^n-\frac{1}{n+1}\left((t+2)^{n+1}-(t+1)^{n+1}\right)\ge 0$ or that  $F(t)/t^n \ge 0$.
By virtue of the binomial formula $F(t)/t^n$ admits the expansion
\[
\frac{F(t)}{t^n}=\sum_{l=0}^n\frac{n!}{l!(n-l)!}\left(\frac{(n+1)2^l+n}{2n+1}-\frac{2^{l+1}-1}{l+1}\right)t^{-l}
\enspace.
\]
Given that $((n+1)2^l+n)(l+1)-(2^{l+1}-1)(2n+1)=((l-3)2^l+l+3)n+(l-1)2^l+1 \ge 0\;\;\; \forall l \ge 0$ the terms in the above expansion are all non-negative implying $F(t)/t^n \ge 0$. \qed
\end{proof}

\section{Proof of Lemma 3}
\begin{proof}
We proceed by induction in the integer $t$. For $t=1$ inequality (\ref{lm3}) reduces to $2n+1 \le (n+1)^2$ which obviously holds $\forall n\ge 0$. Now let us assume that (\ref{lm3}) holds and prove that $(2n+1)(t+1) \sum_{k=1}^{t+1}k^{2n} \le (n+1)^2\left(\sum_{k=1}^{t+1}k^n \right)^2 $. Using (\ref{lm3}) we have $(2n+1)(t+1)\sum_{k=1}^{t+1}k^{2n}=(2n+1)(t+1)\left((t+1)^{2n}+\sum_{k=1}^{t}k^{2n}\right)=(2n+1)(t+1)^{2n+1}+\frac{t+1}{t}(2n+1)t \sum_{k=1}^{t}k^{2n}\le (2n+1)(t+1)^{2n+1}+\frac{t+1}{t}(n+1)^2\left(\sum_{k=1}^{t}k^n \right)^2$. Also $(n+1)^2\left(\sum_{k=1}^{t+1}k^n \right)^2=(n+1)^2\left((t+1)^n+\sum_{k=1}^{t}k^n \right)^2=(n+1)^2(t+1)^{2n}+(n+1)^2\left(\sum_{k=1}^{t}k^n \right)^2+2(n+1)^2(t+1)^n\sum_{k=1}^{t}k^n$. Thus, it suffices to prove that $(2n+1)(t+1)^{2n+1}+\frac{t+1}{t}(n+1)^2\left(\sum_{k=1}^{t}k^n \right)^2 \le (n+1)^2(t+1)^{2n}+(n+1)^2\left(\sum_{k=1}^{t}k^n \right)^2+2(n+1)^2(t+1)^n\sum_{k=1}^{t}k^n$ or, equivalently, that $(n+1)\left(2(n+1)t(t+1)^n-(n+1)\sum_{k=1}^{t}k^n\right)\sum_{k=1}^{t}k^n+(n+1)^2t(t+1)^{2n}-(2n+1)t(t+1)^{2n+1} \ge 0$. Replacing in the above inequality $(n+1)\sum_{k=1}^{t}k^n$ with its upper bound $t(t+1)^n$ from (\ref{lm1}) we end up with the inequality $(n+1)(2n+1)t(t+1)^n\sum_{k=1}^{t}k^n+(n+1)^2t(t+1)^{2n}-(2n+1)t(t+1)^{2n+1} \ge 0$ to prove which, however, is equivalent to (\ref{lm2}). \qed
\end{proof}

\end{document}